\definecolor{dg}{cmyk}{1,0,1,.5}
\newcommand{\R}{ \mbox{${\mathbb R}$}}
\newcommand{\argmax}{\operatornamewithlimits{argmax}}
\newcommand{\ls}{ \left[}
\newcommand{\rs}{ \right]}
\newcommand{\tth}{ \text{th}}
\newtheorem{lemma}{Lemma}
\newtheorem{theorem}{Theorem}
\title{An upper bound on prototype set size for condensed nearest neighbor}
\author{Eric Christiansen}
\begin{document}

\maketitle

% \author{Anonymous}
% \ead{anonymous@anonymous.edu}

\begin{abstract}
The condensed nearest neighbor (CNN) algorithm is a heuristic for reducing the number of prototypical points stored by a nearest neighbor classifier, while keeping the classification rule given by the reduced prototypical set consistent with the full set.
I present an upper bound on the number of prototypical points accumulated by CNN. The bound originates in a bound on the number of times the decision rule is updated during training in the multiclass perceptron algorithm, and thus is independent of training set size.
\end{abstract}

\section{Introduction}
The nearest neighbor (NN) rule assigns to an unclassified point the class of a closest point from a set of prototypical points. The NN algorithm stores every training point as a prototypical point and classifies new points according to the NN rule. 
A nice property is that, for arbitrary class distributions, as the number of training points goes to infinity, the error of the rule produced by the NN algorithm converges to within twice the Bayes error \cite{Devroye}. 
Unfortunately, storing every training point as a prototypical point can be impractical for huge training sets in terms of both memory complexity and the time complexity of classifying according to the NN rule. As a result, many techniques exist for reducing the size of the set of prototypical points. See \cite{Wilson} and \cite{Toussaint} for an overview. It is suggested in \cite{Angiulli} that using a smaller set with the NN rule may be preferable to using the entire training set, because the VC dimensionality of a NN classifier is given by the number of its prototypical points. However, experimental results in \cite{Wilson} suggest generalization performance is best when the entire training set is used, illustrating a tradeoff between computational complexity and generalization performance. 

This paper focuses on an algorithm for finding a consistent subset of the training data, where a set of prototypes is consistent if it classifies the training set correctly using the NN rule. Finding a consistent subset of minimum cardinality is intractable \cite{Wilfong}, but several heuristic approaches exist. One approach is the condensed nearest neighbor (CNN) algorithm, a simple method introduced by Hart \cite{HartCNN} that has met with empirical success \cite{Angiulli}\cite{Wilson}. In this paper, I point out a striking similarity between the CNN algorithm and the multiclass perceptron algorithm described by Collins in \cite{CollinsPerceptron}. I develop that connection to derive an upper bound on the number of prototypical points accumulated by CNN. 
The existence of this bound may help explain CNN's success in \cite{Angiulli} and \cite{Wilson}.

\section{Condensed nearest neighbor}
We will first need some definitions. Let $T = \{ ({\bf x}_i, c_i) \}$ be a set of classified points, where each point ${\bf x}_i \in \R^d$ and each label $c_i$ comes from a finite set of classes $C$.
Let $N_T({\bf x}) \in \R^d$ be the point part of the element in $T$ minimizing Euclidean distance to ${\bf x}$; so $N_T({\bf x})$ is ${\bf x}$'s nearest neighbor in $T$.\footnote{To simplify analysis, we shall assume $N_T$ is always well defined, meaning that every point we consider has exactly one nearest neighbor in $T$. We will also assume there do not exist $({\bf x},c),({\bf x}',c') \in T$ such that ${\bf x} = {\bf x}'$ and $c \neq c'$; in other words, there are no conflicting labels.} Let $C_T({\bf x}) \in C$ be the class of $N_T({\bf x})$. If $T = \emptyset$, then neither $N_T$ nor $C_T$ exist, and it is understood they cannot be said to equal anything. A set $P$ is said to be \textbf{consistent} with $T$ if $C_P({\bf x}) = c$ for every $({\bf x},c) \in T$.

Suppose $T = \{({\bf x}_i,c_i )\}$ is our training set. Then the condensed nearest neighbor (CNN) algorithm builds a consistent subset of prototypical points $P$ \cite{HartCNN}. Its definition is given in Algorithm \ref{alg:CNN}. Here the \texttt{for} loop selects points in arbitrary order, but without repetition, from $T$.

\begin{algorithm}
\caption{Condensed nearest neighbor algorithm}
\label{alg:CNN}
\begin{algorithmic}
\STATE $P \leftarrow \emptyset$
\STATE flag $\leftarrow$ True
\WHILE {flag is True}
	\STATE flag $\leftarrow$ False
	\FOR {$({\bf x},c)$ in $T$}
		\IF {$C_P({\bf x}) \neq c$}
			\STATE $P \leftarrow P \cup \{({\bf x} , c)\}$
			\STATE flag $\leftarrow$ True
		\ENDIF
	\ENDFOR
\ENDWHILE
\end{algorithmic}
\end{algorithm}

For finite $T$, this algorithm is guaranteed to terminate; in the worst case, it stops after adding every element of $T$ to $P$. It is also easy to see that after termination, $P$ will be consistent with $T$. In practice, CNN often finds a prototypical set far smaller than the original training set \cite{Angiulli}\cite{Wilson}, though this improvement is obviously more pronounced when the training set is much larger than it needs to be.

CNN can be naturally modified for the online context; the \texttt{while} loop is dropped and it is understood the \texttt{for} loop becomes an iteration over an infinite stream of training data. However, in the case of overlapping class distributions (non-zero Bayes' error), the expected size of $P$ will grow linearly with the number of observed points, perhaps making CNN unsuitable.

\section{Multiclass perceptron}
The multiclass perceptron (MP) algorithm \cite{CollinsPerceptron} is listed as Algorithm \ref{alg:cperc}. It can be thought of as a generalization of the kernelized perceptron algorithm (\cite{bishop-2006}, pages 192-196) to handle multiple classes. Here, $\phi:\R^d \times C \rightarrow H$ is an arbitrary feature function, where $H$ is a Hilbert space. The algorithm builds a decision rule through iterative updates to a vector ${\bf w}$. Note a nonlinear decision rule can be obtained through the use of a nonlinear feature function.

\begin{algorithm}
\caption{Multiclass perceptron algorithm}
\label{alg:cperc}
\begin{algorithmic}
\STATE ${\bf w} \leftarrow {\bf 0}$
\STATE flag $\leftarrow$ True
\WHILE {flag is True}
\STATE flag $\leftarrow$ False
\FOR {$({\bf x},c)$ in $T$}
	\STATE $y \gets \argmax_{y \in C} {\bf w} \cdot \phi({\bf x},y)$
	\IF {$y \neq c$}
		\STATE ${\bf w} \leftarrow {\bf w} + \phi({\bf x},c)-\phi({\bf x},y)$
		\STATE flag $\leftarrow$ True
	\ENDIF
\ENDFOR
\ENDWHILE
\end{algorithmic}
\end{algorithm}

I will say a set $T=\{({\bf x}_i,c_i)\}$ is \textbf{separable with margin $\delta > 0$} if there exists a unit vector ${\bf w}^* \in H$ such that for every $({\bf x},c) \in T$ and for every $y \in C-\{c\}$ we have ${\bf w}^* \cdot \phi({\bf x},c) - {\bf w}^* \cdot \phi({\bf x},y) \geq \delta$. We will say $T$ has \textbf{radius at most $R$} if for every $({\bf x},c) \in T$ and for every $y \in C-\{c\}$ we have $||\phi({\bf x},c) - \phi({\bf x},y)|| \leq R$. Note both margin and radius depend on the feature function $\phi$ as well as the data $T$. From \cite{CollinsPerceptron} we have Theorem \ref{thm:updates}.

\begin{theorem}
\label{thm:updates}
Let $T$ be separable with margin $\delta$ and radius at most $R$. Then the MP algorithm updates ${\bf w}$ at most $R^2/\delta^2$ times.
\end{theorem}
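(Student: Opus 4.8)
The plan is to adapt the classical perceptron potential argument to the multiclass update rule by tracking the weight vector ${\bf w}$ across updates and squeezing its projection onto the separating direction between two estimates. Write ${\bf w}_k$ for the value of ${\bf w}$ immediately after the $k$-th update, with ${\bf w}_0 = {\bf 0}$, and suppose the $k$-th update is triggered by a point $({\bf x},c) \in T$ with predicted label $y = \argmax_{y \in C} {\bf w}_{k-1} \cdot \phi({\bf x},y)$; since an update occurs, $y \neq c$. The update is then ${\bf w}_k = {\bf w}_{k-1} + \phi({\bf x},c) - \phi({\bf x},y)$. I would establish a lower bound on ${\bf w}^* \cdot {\bf w}_k$ that grows linearly in $k$ and an upper bound on $||{\bf w}_k||$ that grows like $\sqrt{k}$, and then combine them via Cauchy--Schwarz.

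First I would lower-bound the projection onto the separating direction. Dotting the update rule with ${\bf w}^*$ gives ${\bf w}^* \cdot {\bf w}_k = {\bf w}^* \cdot {\bf w}_{k-1} + {\bf w}^* \cdot (\phi({\bf x},c) - \phi({\bf x},y))$, and because $y \neq c$ the margin hypothesis forces the second term to be at least $\delta$. Summing this telescoping inequality from ${\bf w}_0 = {\bf 0}$ yields ${\bf w}^* \cdot {\bf w}_k \geq k\delta$.

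Next I would upper-bound the norm. Expanding $||{\bf w}_k||^2 = ||{\bf w}_{k-1}||^2 + 2\,{\bf w}_{k-1}\cdot(\phi({\bf x},c)-\phi({\bf x},y)) + ||\phi({\bf x},c)-\phi({\bf x},y)||^2$, the final term is at most $R^2$ by the radius hypothesis. The crucial step --- and the one I expect to be the main obstacle --- is showing the cross term is nonpositive: this is exactly where the definition of $y$ as the $\argmax$ enters, since ${\bf w}_{k-1}\cdot\phi({\bf x},y) \geq {\bf w}_{k-1}\cdot\phi({\bf x},c)$ gives ${\bf w}_{k-1}\cdot(\phi({\bf x},c)-\phi({\bf x},y)) \leq 0$. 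With the cross term discarded, the recursion telescopes to $||{\bf w}_k||^2 \leq kR^2$.

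Finally I would combine the two bounds. Since ${\bf w}^*$ is a unit vector, Cauchy--Schwarz gives $k\delta \leq {\bf w}^* \cdot {\bf w}_k \leq ||{\bf w}_k|| \leq \sqrt{k}\,R$, and dividing by $\sqrt{k}$ and squaring yields $k \leq R^2/\delta^2$. Because $k$ counts the updates, this is precisely the claimed bound. The care points are verifying that the $\argmax$ inequality is invoked at the exact iterate ${\bf w}_{k-1}$ producing the update, and confirming that both the margin and radius hypotheses quantify over every $y \in C-\{c\}$, so that the specific predicted label $y$ is covered in each estimate.
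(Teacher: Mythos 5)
Your proof is correct, and it is precisely the standard perceptron potential argument that the cited source uses: note the paper itself does not prove Theorem \ref{thm:updates} at all, but imports it from Collins \cite{CollinsPerceptron}, whose proof is exactly your two-bound scheme (margin gives ${\bf w}^* \cdot {\bf w}_k \geq k\delta$, the $\argmax$ property kills the cross term to give $||{\bf w}_k||^2 \leq kR^2$, and Cauchy--Schwarz closes the squeeze). Your two flagged care points are also handled correctly: the $\argmax$ is evaluated at ${\bf w}_{k-1}$, and since $y \neq c$ whenever an update fires, the universal quantification over $y \in C-\{c\}$ in both the margin and radius hypotheses covers the predicted label.
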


Note the number of updates to ${\bf w}$ is the same as the number of points $({\bf x},c) \in T$ that are misclassified during the construction of the decision rule. Also note if $T$ is not separable, then it will not have a margin $\delta > 0$. In fact, the MP algorithm will never terminate.

\section{An upper bound on prototype set size for condensed nearest neighbor}
Here I develop the connection between the condensed nearest neighbor (CNN) and multiclass perceptron (MP) algorithms. The motivating insight comes from support vector machine (SVM) lore, which asserts that an SVM with a Gaussian kernel is like a smoothed nearest neighbor (NN) classifier. In a similar vein, I will show that for certain $\phi$s and ${\bf w}$s, we can use ${\bf w} \cdot \phi({\bf x},y)$ as a proxy for the nearness of ${\bf x}$ and its nearest neighbor of class $y$, making $\argmax_{y \in C} {\bf w} \cdot \phi({\bf x},y)$ the class of ${\bf x}'s$ nearest neighbor.

First we need some definitions. Let $T = \{ ({\bf x}_i, c_i) \}$ be a training set, and let $P \subseteq T$. Let $o:\R^d \rightarrow C$ be any multifunction, where for every $({\bf x},c) \in T$, we have $o({\bf x}) \neq c$. 
So $o({\bf x})$ can be any class, except the true class of ${\bf x}$.
I say the set of all ${\bf w}$s of the form given in (\ref{eqn:restricted}) are the \textbf{restricted} ${\bf w}$s corresponding to $P$.
\begin{align}
{\bf w} 
&= \sum_{({\bf x},c) \in P} \phi({\bf x},c) - \phi({\bf x},o({\bf x}))\label{eqn:restricted}
\end{align}
Note in (\ref{eqn:restricted}), ${\bf w}$ has exactly the form one might expect the ${\bf w}$ from the MP algorithm to have after termination, except that every summand in a restricted ${\bf w}$ has the unitary coefficient.

Now I define a condition on $\phi$ that makes $\argmax_{y \in C} {\bf w} \cdot \phi({\bf x},y)$ behave like the NN rule. I will say $\phi$ is \textbf{neighborly} (with respect to $T$) if for every $P \subseteq T$ and every corresponding restricted ${\bf w}$, as well as for every training point $({\bf x},c) \in T$, we have $\argmax_{y \in C} {\bf w} \cdot \phi({\bf x},y) = C_P({\bf x})$. When $|T|$ is finite, we can always find uncountably many neighborly $\phi$s; see Appendix \ref{sec:appdx} for a construction using the Gaussian kernel.

By Lemma \ref{thm:restricted}, when $\phi$ is neighborly, the MP algorithm will never misclassify the same point twice. So we will only need to consider restricted ${\bf w}$s when analyzing the MP algorithm. 

\begin{lemma}
\label{thm:restricted}
Let $\phi$ be neighborly. Then at all times during the execution of Algorithm \ref{alg:cperc}, ${\bf w}$ will be restricted.
\end{lemma}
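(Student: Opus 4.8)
The plan is to induct on the number of updates the MP algorithm makes to ${\bf w}$, taking as the inductive invariant the very statement of the lemma: that at each stage there exist some $P \subseteq T$ and some admissible multifunction $o$ realizing ${\bf w}$ through (\ref{eqn:restricted}). For the base case, I observe that before any update ${\bf w} = {\bf 0}$, which is the restricted ${\bf w}$ corresponding to $P = \emptyset$, since the empty sum in (\ref{eqn:restricted}) is ${\bf 0}$.

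For the inductive step, suppose ${\bf w}$ is restricted, corresponding to a set $P$ and multifunction $o$, and that Algorithm \ref{alg:cperc} is about to perform an update because some $({\bf x},c) \in T$ satisfies $y := \argmax_{y' \in C} {\bf w} \cdot \phi({\bf x},y') \neq c$. The crux of the argument, and the step I expect to be the main obstacle, is to show that $({\bf x},c)$ is not already an element of $P$. When $P = \emptyset$ this is immediate, so suppose $P \neq \emptyset$ and invoke neighborliness: since $\phi$ is neighborly and ${\bf w}$ is restricted corresponding to $P$, we have $y = C_P({\bf x})$. If $({\bf x},c)$ were in $P$, then ${\bf x}$ would be its own nearest neighbor in $P$ at distance zero, so the no-conflicting-labels assumption would force $C_P({\bf x}) = c$ and hence $y = c$, contradicting $y \neq c$. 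Thus $({\bf x},c) \notin P$, and by the no-conflicting-labels assumption ${\bf x}$ does not occur as the point of any element of $P$.

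It then remains to exhibit the updated vector in restricted form. I would set $P' := P \cup \{({\bf x},c)\}$ and define $o'$ to agree with $o$ off ${\bf x}$ and to send ${\bf x}$ to $y$; this $o'$ is admissible because $y \neq c$ and ${\bf x}$'s unique label in $T$ is $c$, so $o'$ still avoids every true label. Because ${\bf x}$ does not appear as a point of $P$, redefining $o$ at ${\bf x}$ leaves the $P$-sum unchanged, and the update rule gives
\[
{\bf w} + \phi({\bf x},c) - \phi({\bf x},y) = \sum_{({\bf x}',c') \in P'} \phi({\bf x}',c') - \phi({\bf x}',o'({\bf x}')),
\]
which is exactly the restricted ${\bf w}$ corresponding to $P'$ and $o'$. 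This closes the induction. Apart from the neighborliness argument isolating the fresh point, every step is routine bookkeeping on the defining sum.
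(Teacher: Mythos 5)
Your proof is correct and follows essentially the same route as the paper's: induction on the number of updates, with neighborliness used to show that the point triggering an update cannot already belong to $P$. You simply make explicit some bookkeeping the paper leaves implicit (the construction of the updated multifunction $o'$, the $P = \emptyset$ edge case, and why $({\bf x},c) \in P$ would force $C_P({\bf x}) = c$), which is fine but not a different argument.
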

\begin{proof}
This is a proof by induction. Note ${\bf w}$ is initially restricted. 
Further, if ${\bf w}$ is restricted before the $n^\tth$ update, then it will be restricted after the $n^\tth$ update. 
For let $({\bf x},c) \in T$ be the point causing the $n^\tth$ update. So $\argmax_{y \in C} {\bf w} \cdot \phi({\bf x},y) \neq c$. But $\phi$ is neighborly, and ${\bf w}$ is restricted before this update, so $\argmax_{y \in C} {\bf w} \cdot \phi({\bf x},y) = C_P({\bf x})$. So $C_P({\bf x}) \neq c$. So $({\bf x},c) \not \in P$, and thus ${\bf w}$ is restricted after the $n^\tth$ update.
\end{proof}

The CNN algorithm is essentially the MP algorithm \ref{alg:cperc} in the special case $\phi$ is neighborly, so we should be able to apply to the CNN algorithm theorems that apply to the MP algorithm. In that spirit, Lemma \ref{thm:mainlemma} says we can apply Theorem \ref{thm:updates} to the CNN algorithm.

\begin{lemma}
\label{thm:mainlemma}
Let $\phi$ be neighborly, and suppose $T$ is separable with margin $\delta$ and radius at most $R$. Then the CNN algorithm updates $P$ at most $R^2 / \delta^2$ times.
\end{lemma}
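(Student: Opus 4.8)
The plan is to establish a tight correspondence between a run of the CNN algorithm and a run of the MP algorithm under a neighborly feature function $\phi$, and then invoke Theorem~\ref{thm:updates}. First I would observe that the two algorithms share the same control flow: both process the points of $T$ in the same arbitrary-but-fixed order within each pass of the \texttt{while} loop, and both trigger an update exactly when the current rule misclassifies the current point $({\bf x},c)$. The CNN algorithm updates $P$ precisely when $C_P({\bf x}) \neq c$, whereas the MP algorithm updates ${\bf w}$ precisely when $\argmax_{y \in C} {\bf w}\cdot\phi({\bf x},y) \neq c$. The whole argument rests on showing these two conditions coincide throughout execution.

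The key step is to run the two algorithms in lockstep and prove by induction on the number of updates that, at every stage, the ${\bf w}$ maintained by MP is the restricted ${\bf w}$ corresponding to the set $P$ maintained by CNN. Lemma~\ref{thm:restricted} already guarantees that MP's ${\bf w}$ stays restricted; what I additionally need is that the particular restricted ${\bf w}$ it holds corresponds to exactly the set $P$ that CNN has accumulated. I would verify the base case (${\bf w} = {\bf 0}$ corresponds to $P = \emptyset$) and then the inductive step: if before a given point the invariant holds, then by neighborliness $\argmax_{y \in C} {\bf w}\cdot\phi({\bf x},y) = C_P({\bf x})$, so MP fires its update iff $C_P({\bf x}) \neq c$, which is exactly CNN's trigger. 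When both fire, CNN sets $P \leftarrow P \cup \{({\bf x},c)\}$ and MP adds $\phi({\bf x},c) - \phi({\bf x},y)$ with $y = \argmax = C_P({\bf x}) = o({\bf x})$ for the misclassified point, so the new ${\bf w}$ is exactly the restricted ${\bf w}$ for the enlarged $P$, preserving the invariant.

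With this lockstep invariant in hand, the update counts of the two algorithms are identical: each addition to $P$ in CNN is matched one-to-one with an update to ${\bf w}$ in MP. Since $T$ is separable with margin $\delta$ and radius at most $R$, Theorem~\ref{thm:updates} bounds the number of MP updates by $R^2/\delta^2$, and therefore the number of times CNN updates $P$ is at most $R^2/\delta^2$ as well.

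I expect the main obstacle to be handling the choice of the multifunction $o$ cleanly in the inductive step. The definition of a restricted ${\bf w}$ quantifies over an arbitrary $o$ with $o({\bf x}) \neq c$, and I must ensure that the specific $y$-value MP selects when it updates, namely the misclassified argmax class, is consistent with such an admissible $o$; this requires checking that this selected class is never the true class $c$ of the point being added, which holds precisely because the update fires only when $y \neq c$. Making this bookkeeping rigorous---so that the sequence of MP updates assembles into a genuine restricted ${\bf w}$ for the final $P$---is the delicate part, but neighborliness and the update condition together supply exactly what is needed.
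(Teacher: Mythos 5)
Your proposal is correct and takes essentially the same approach as the paper: the paper realizes your lockstep coupling concretely as a ``hybrid'' algorithm that maintains both ${\bf w}$ and $P$, uses neighborliness together with Lemma \ref{thm:restricted} to replace $\argmax_{y \in C} {\bf w} \cdot \phi({\bf x},y)$ by $C_P({\bf x})$, and then deletes the ${\bf w}$ lines to recover CNN with the update count unchanged, finishing with Theorem \ref{thm:updates}. Your explicit invariant---that MP's current ${\bf w}$ is a restricted ${\bf w}$ corresponding to CNN's current $P$, with the misclassified argmax class serving as an admissible $o({\bf x}) \neq c$---is precisely what justifies the paper's replacement step, so the two arguments differ only in presentation.
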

\begin{proof}
The idea is to morph the MP algorithm into the CNN algorithm without changing the number of misclassifications, which are in one-to-one correspondence with the number of updates to ${\bf w}$ in the MP algorithm, and are in one-to-one correspondence with the number of updates to $P$ in the CNN algorithm.

We begin by inserting two lines of code into the MP algorithm, obtaining Algorithm \ref{alg:cperc_hyb}. The new lines in Algorithm \ref{alg:cperc_hyb} are lines \ref{alg:cperc_hyb_1} and \ref{alg:cperc_hyb_2}. These lines clearly have no effect on the number of updates to ${\bf w}$, but note in Algorithm \ref{alg:cperc_hyb} the number of updates to $P$ equals the number of updates to ${\bf w}$.
\begin{algorithm}[h!]
\caption{Hybrid multiclass perceptron algorithm}
\label{alg:cperc_hyb}
\begin{algorithmic}[1]
\STATE ${\bf w} \leftarrow {\bf 0}$
\STATE $P \leftarrow \emptyset$ \label{alg:cperc_hyb_1}
\STATE flag $\leftarrow$ True
\WHILE {flag is True}
\STATE flag $\leftarrow$ False
\FOR {$({\bf x}_i,c_i)$ in $T$}
	\STATE $y_i \gets \argmax_{y \in C} {\bf w} \cdot \phi({\bf x}_i,y)$
	\IF {$y_i \neq c_i$}
		\STATE ${\bf w} \leftarrow {\bf w} + \phi({\bf x}_i,c_i)-\phi({\bf x}_i,y_i)$
		\STATE $P \leftarrow P \cup \{({\bf x}_i , c_i)\}$ \label{alg:cperc_hyb_2}
		\STATE flag $\leftarrow$ True
	\ENDIF
\ENDFOR
\ENDWHILE
\end{algorithmic}
\end{algorithm}
We then use Lemma \ref{thm:restricted}, together with the assumption $\phi$ is neighborly, to replace $\argmax_{y \in C} {\bf w} \cdot \phi({\bf x}_i,y)$ with $C_P({\bf x}_i)$ in Algorithm \ref{alg:cperc_hyb}, yielding Algorithm \ref{alg:CNN_hyb}.
\begin{algorithm}[h!]
\caption{Hybrid condensed nearest neighbor algorithm}
\label{alg:CNN_hyb}
\begin{algorithmic}[1]
\STATE ${\bf w} \leftarrow {\bf 0}$ \label{alg:CNN_hyb_1}
\STATE $P \leftarrow \emptyset$
\STATE flag $\leftarrow$ True
\WHILE {flag is True}
\STATE flag $\leftarrow$ False
\FOR {$({\bf x}_i,c_i)$ in $T$}
	\IF {$C_P({\bf x}_i) \neq c_i$}
		\STATE ${\bf w} \leftarrow {\bf w} + \phi({\bf x}_i,c_i)-\phi({\bf x}_i,C_P({\bf x}_i))$ \label{alg:CNN_hyb_2}
		\STATE $P \leftarrow P \cup \{({\bf x}_i , c_i)\}$
		\STATE flag $\leftarrow$ True
	\ENDIF
\ENDFOR
\ENDWHILE
\end{algorithmic}
\end{algorithm}
Note Algorithm \ref{alg:CNN_hyb} updates $P$ exactly as many times as the MP algorithm updated ${\bf w}$, and from Algorithm \ref{alg:CNN_hyb} we can simply delete both lines referencing ${\bf w}$ (lines \ref{alg:CNN_hyb_1} and \ref{alg:CNN_hyb_2}). This yields the CNN algorithm, without changing the number of updates to $P$. So by Theorem \ref{thm:updates}, we are done.
\end{proof}

Since Lemma \ref{thm:mainlemma} gives a bound for any neighborly feature function, the set of all neighborly feature functions yields a set of bounds. It is natural to choose the best bound, which is the essence of Theorem \ref{thm:maintheorem}. Note the number of updates to $P$ is exactly the ultimate size of $P$, or equivalently the number of points accumulated by the CNN algorithm.

\begin{theorem}
\label{thm:maintheorem}
Let $\Phi = \{ \phi : \phi \text{ is neighborly} \}$ and suppose, for each $\phi \in \Phi$, $T$ is separable with margin $\delta_\phi$ and radius at most $R_\phi$. Then the CNN algorithm accumulates at most $\inf_{\phi \in \Phi} R_\phi^2 / \delta_\phi^2$ prototypical points.
\end{theorem}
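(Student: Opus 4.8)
The plan is to recognize Theorem \ref{thm:maintheorem} as a bookkeeping consequence of Lemma \ref{thm:mainlemma}, whose only real content is the observation that the left-hand quantity is a single number not depending on $\phi$. First I would stress that the feature function $\phi$ never appears in the CNN algorithm (Algorithm \ref{alg:CNN}): once a traversal order of $T$ is fixed, the accumulated set $P$ is completely determined by $T$ alone, and hence so is its final cardinality. Call this fixed number $m$. By the remark immediately preceding the theorem, $m$ is precisely the number of prototypical points accumulated by CNN, and it is the same quantity that Lemma \ref{thm:mainlemma} bounds.

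Next I would apply Lemma \ref{thm:mainlemma} once for each neighborly feature function. Fix any $\phi \in \Phi$. Its hypotheses are met: $\phi$ is neighborly by the definition of $\Phi$, and $T$ is separable with margin $\delta_\phi$ and radius at most $R_\phi$ by assumption. The lemma therefore yields $m \le R_\phi^2 / \delta_\phi^2$. Since $\Phi$ is nonempty whenever $|T|$ is finite, as the construction in the appendix shows, this produces a whole family of valid upper bounds on the one fixed quantity $m$.

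Finally I would close with the defining property of the infimum. The inequalities above say exactly that $m$ is a lower bound for the set $\{ R_\phi^2 / \delta_\phi^2 : \phi \in \Phi \}$; because the infimum is the greatest lower bound, it follows that $m \le \inf_{\phi \in \Phi} R_\phi^2 / \delta_\phi^2$, which is the claim.

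The step I expect to carry the most weight is the first: making explicit that $m$ is genuinely independent of $\phi$, so that a single left-hand side may be compared against the entire family of right-hand bounds and the infimum passed through legitimately. A related subtlety worth flagging is the arbitrary traversal order in the \texttt{for} loop of Algorithm \ref{alg:CNN}; different orders may yield different $m$, but the mistake bound inherited from Theorem \ref{thm:updates} is order-independent, so \emph{every} order produces an $m$ bounded by each $R_\phi^2/\delta_\phi^2$ and hence by their infimum, and the ``at most'' in the statement holds uniformly.
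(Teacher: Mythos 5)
Your proposal is correct and follows the same route as the paper's own proof: apply Lemma \ref{thm:mainlemma} to each neighborly $\phi$ and pass to the infimum over $\Phi$. The extra points you make explicit --- that the number of accumulated prototypes is a single quantity independent of $\phi$ (since $\phi$ never appears in Algorithm \ref{alg:CNN}), and that the bound holds for every traversal order --- are exactly the facts the paper's two-sentence proof relies on implicitly, so your write-up is a more careful rendering of the same argument rather than a different one.
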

\begin{proof}
By Lemma \ref{thm:mainlemma}, for any neighborly $\phi$, the CNN algorithm accumulates at most $R_\phi^2/\delta_\phi^2$ representative points. Thus since $\Phi$ is the set of all neighborly $\phi$s, the CNN algorithm accumulates at most $\inf_{\phi \in \Phi} R_\phi^2/\delta_\phi^2$ representative points.
\end{proof}

So $|P|$ is bounded above by the best bound over all neighborly $\phi$s. 
% While it is reassuring this bound exists, I know of no way to actually compute this bound, though it is possible to compute a bound on the bound in a rather pointless way, by simply training a support vector machine with a kernel that is guaranteed to separate the data

\section{Conclusion}
In Theorem \ref{thm:maintheorem} I presented a bound on the number of prototypical points accumulated by the condensed nearest neighbor (CNN) algorithm. This bound came from a bound on the number of updates to the decision rule in the multiclass perceptron algorithm. Unfortunately, as with the multiclass perceptron bound presented in \cite{CollinsPerceptron}, estimating the bound is likely to be too expensive to be practical. Fortunately, as with the multiclass perceptron bound, this bound is independent of the size of the training set used as input to the CNN algorithm. Perhaps the existence of this bound may help explain CNN's empirical success.

\appendix

\section{Neighborly $\phi$s}
\label{sec:appdx}
Let $T = \{ ({\bf x}_i,c_i) \}$ be a finite training set, where ${\bf x}_i \in \R^d$, $c_i \in C$, and $|C|$ is finite. I will show there exist uncountably many neighborly feature functions $\phi : \R^d \times C \rightarrow H$, where $H$ is a Hilbert space. In other words, there exist uncountably many functions $\phi : \R^d \times C \rightarrow H$ such that for any $P \subseteq T$ and restricted ${\bf w}$ corresponding to $P$, as well as for any $({\bf x},c) \in T$, we have $\argmax_{y \in C} {\bf w} \cdot \phi({\bf x},y) = C_P({\bf x})$. The idea is to construct an uncountable family of such functions using the Gaussian kernel.

Let $\psi_\sigma:\R^d \rightarrow H$ correspond to the Gaussian kernel, so that 
\[
\psi_\sigma({\bf x}) \cdot \psi_\sigma({\bf x}') = e^{\frac{-1}{2 \sigma^2} ||{\bf x} - {\bf x}'||^2}.
\] 
We know such a $\psi_\sigma$ exists (\cite{bishop-2006}, pages 294-299). Let $\phi_\sigma({\bf x},c) \in H$ be a vector such that the $i^{\text{th}}$ element $\phi_\sigma({\bf x},c)_i$ of $\phi_\sigma({\bf x},c)$ is
\[
\phi_\sigma({\bf x},c)_i = 
\left\{ \begin{tabular}{ c l}
  $\psi_\sigma({\bf x})_j$ & if there exists a $j$ such that $i = (j-1)|C| + c$  \\
  $0$ & otherwise  \\
\end{tabular}\right..
\]
The idea behind the above definition is to make
\[
\phi_\sigma({\bf x},c)  \cdot \phi_\sigma({\bf x}',c') = I(c = c') \psi_\sigma({\bf x}) \cdot \psi_\sigma({\bf x}'),
\]
where $I$ is the indicator function on Boolean inputs given by $I(\texttt{true})=1$ and $I({\texttt{false}}) = 0$.

I claim $\phi_\sigma$ will be neighborly if we choose $\sigma > 0$ sufficiently small. First we need the following result.

\begin{lemma}
\label{thm:appdxlemma}
Let $P \subseteq T$ and let ${\bf w}$ be restricted corresponding to $P$. Let $({\bf x}',c') \in T$. Then there exists a $\sigma^*>0$ such that for every $0 < \sigma < \sigma^*$, we have $\argmax_{y \in C} {\bf w} \cdot \phi_\sigma({\bf x}',y) = C_P({\bf x}')$.
\end{lemma}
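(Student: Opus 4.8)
The plan is to compute the inner product ${\bf w} \cdot \phi_\sigma({\bf x}',y)$ in closed form, isolate the contribution of ${\bf x}'$'s nearest neighbor in $P$, and show that as $\sigma \to 0$ this contribution overwhelms every competing term, forcing the $\argmax$ to equal $C_P({\bf x}')$. Throughout I take $P \neq \emptyset$, which is implicit in the statement since $C_P({\bf x}')$ must be defined.

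First I would expand ${\bf w}$ using its restricted form (\ref{eqn:restricted}) together with the inner product identity $\phi_\sigma({\bf x},c) \cdot \phi_\sigma({\bf x}',c') = I(c=c')\, e^{-\frac{1}{2\sigma^2}||{\bf x}-{\bf x}'||^2}$ established for this construction, obtaining
\[
{\bf w} \cdot \phi_\sigma({\bf x}',y) = \sum_{({\bf x},c) \in P} \left[ I(c=y) - I(o({\bf x})=y) \right] e^{-\frac{1}{2\sigma^2}||{\bf x}-{\bf x}'||^2}.
\]
Let $({\bf x}^*,c^*) \in P$ be the element whose point part is $N_P({\bf x}')$, so that $c^* = C_P({\bf x}')$, and set $r^* = ||{\bf x}^*-{\bf x}'||$. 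Since the $\argmax$ over $y$ is unchanged by scaling with the positive constant $e^{-\frac{1}{2\sigma^2}(r^*)^2}$, I would instead analyze the normalized quantity
\[
f_\sigma(y) = \sum_{({\bf x},c) \in P} \left[ I(c=y) - I(o({\bf x})=y) \right] e^{-\frac{1}{2\sigma^2}\left( ||{\bf x}-{\bf x}'||^2 - (r^*)^2 \right)}.
\]

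Next I would take $\sigma \to 0$ termwise. Because $N_P({\bf x}')$ is the unique nearest neighbor, every $({\bf x},c) \in P$ other than $({\bf x}^*,c^*)$ has $||{\bf x}-{\bf x}'|| > r^*$, so its exponent is strictly negative and that term tends to $0$; only the $({\bf x}^*,c^*)$ term survives, with exponent $0$. Hence $f_\sigma(y) \to I(c^*=y) - I(o({\bf x}^*)=y)$. Evaluating this limit: at $y=c^*$ it equals $1$ (using $o({\bf x}^*) \neq c^*$, which holds since $({\bf x}^*,c^*) \in T$); at $y = o({\bf x}^*)$ it equals $-1$; and for every other $y$ it equals $0$. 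Thus in the limit $c^*$ is the strict maximizer, beating each competitor by at least $1$. Since $C$ is finite, I can then pick a single $\sigma^* > 0$ below which $f_\sigma(c^*)$ exceeds $f_\sigma(y)$ for all $y \neq c^*$, and the $\argmax$ therefore equals $c^* = C_P({\bf x}')$ for every $0 < \sigma < \sigma^*$.

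The main obstacle is justifying that exactly one term survives the $\sigma \to 0$ limit, which rests on $N_P({\bf x}')$ being strictly closer to ${\bf x}'$ than all other points of $P$; here I would invoke the standing assumption that nearest neighbors are well defined to rule out equidistant ties within $P$. A secondary point to handle cleanly is the passage from ``strict maximizer in the limit'' to ``maximizer for all sufficiently small $\sigma$'': the finiteness of $C$ makes this routine, since it reduces to finitely many one-sided limit comparisons rather than any uniform estimate over $y$.
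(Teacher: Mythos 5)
Your proof is correct and takes essentially the same route as the paper: expand ${\bf w} \cdot \phi_\sigma({\bf x}',y)$ via the kernel identity, normalize by the nearest neighbor's Gaussian factor, and use the strict distance gap (from uniqueness of the nearest neighbor) to make every other term vanish as $\sigma \to 0$. The only cosmetic difference is the last step: the paper bounds the residual sum uniformly by $\frac{1}{2}$ (a bound independent of $y$), while you compare $c^*$ against each of the finitely many competitors separately; both are valid.
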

\begin{proof}
Note ${\bf w}$ is of the form
\begin{align*}
{\bf w} 
&= \sum_{({\bf x},c) \in P} \phi_\sigma({\bf x},c) - \phi_\sigma({\bf x},o({\bf x})).
\end{align*}
So 
\begin{align}
{\bf w} \cdot \phi_\sigma({\bf x}',y)
&= \sum_{({\bf x},c) \in P} I(c=y) \psi_\sigma({\bf x}) \cdot \psi_\sigma({\bf x}') - I(o({\bf x})=y) \psi_\sigma({\bf x}) \cdot \psi_\sigma({\bf x}') \\
&= \sum_{({\bf x},c) \in P} \ls I(c=y) - I(o({\bf x})=y) \rs e^{\frac{-1}{2 \sigma^2} ||{\bf x} - {\bf x'} ||^2}. \label{eqn:appndxdom}
\end{align}
I want to show that as $\sigma \rightarrow 0$, the summation in (\ref{eqn:appndxdom}) will be dominated by the term corresponding to the nearest neighbor of ${\bf x}'$ in $P$. The intuition behind this is that the summands of (\ref{eqn:appndxdom}) correspond to individual Gaussians in a mixture of Gaussians, where the typical constraints on the mixing coefficients are dropped. For large $\sigma$, the value of the mixture at a point might depend significantly on several nearby Gaussians, but as we uniformly shrink the variance of the Gaussians, the value of the mixture at a point will depend almost entirely on the Gaussian nearest that point, as is borne out in the following analysis. First, I divide through by $e^{\frac{-1}{2 \sigma^2} ||N_P({\bf x}') - {\bf x'} ||^2}$, giving us
\begin{align}
(\ref{eqn:appndxdom})
&\propto \sum_{({\bf x},c) \in P} \ls I(c=y) - I(o({\bf x})=y) \rs e^{\frac{1}{2 \sigma^2} \ls ||N_P({\bf x}') - {\bf x'} ||^2 - ||{\bf x} - {\bf x'} ||^2 \rs} \\
&= \ls I(C_P({\bf x}')=y) - I(o(N_P({\bf x}'))=y) \rs \nonumber \\
&\quad + \sum_{({\bf x},c) \in P:{\bf x} \neq N_P({\bf x}')} \ls I(c=y) - I(o({\bf x})=y) \rs e^{\frac{1}{2 \sigma^2} \ls ||N_P({\bf x}') - {\bf x'} ||^2 - ||{\bf x} - {\bf x'} ||^2 \rs}. \label{eqn:appndxsep}
\end{align}
Let $S$ be the summation given in (\ref{eqn:appndxsep}). Note for every $({\bf x},c) \in P$ such that ${\bf x} \neq N_P({\bf x}')$, we have
\[
||N_P({\bf x}') - {\bf x'} ||^2 - ||{\bf x} - {\bf x'} ||^2 < 0.
\]
Thus as $\sigma \rightarrow 0$, every term in S will go to zero. So by the definition of a limit, there is a $\sigma^* > 0$ such that $\sigma < \sigma^*$ implies $|S| < \frac{1}{2}$. Let $\sigma < \sigma^*$. Then if $y = C_P({\bf x}')$, we have $(\ref{eqn:appndxsep}) \geq 1 - |S| > \frac{1}{2}$. If $y \neq C_P({\bf x}')$, we have $(\ref{eqn:appndxsep}) \leq 0 + |S| < \frac{1}{2}$. So $\argmax_{y \in C} {\bf w} \cdot \phi_\sigma({\bf x}',y) = C_P({\bf x}')$.
\end{proof}

With Lemma \ref{thm:appdxlemma} I can prove Theorem \ref{thm:appdxthm}, establishing the desired result.

\begin{theorem}
\label{thm:appdxthm}
For any finite $T$, there exist uncountably many neighborly feature functions.
\end{theorem}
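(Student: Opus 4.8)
The plan is to show that the one-parameter family $\{\phi_\sigma\}_{\sigma>0}$ constructed above already contains an uncountable subfamily of neighborly feature functions, namely those with $\sigma$ taken sufficiently small. Lemma~\ref{thm:appdxlemma} does all of the per-instance analytic work: it guarantees that for each fixed subset $P \subseteq T$, each restricted ${\bf w}$ corresponding to $P$, and each training point $({\bf x}',c') \in T$, there is a positive threshold below which $\phi_\sigma$ reproduces the nearest-neighbor rule on that particular instance. The whole task, then, is to upgrade these instance-by-instance thresholds into a single threshold that works for every instance at once, and then to observe that this leaves an uncountable range of admissible $\sigma$.

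First I would argue that there are only finitely many instances to control. Since $T$ is finite there are finitely many subsets $P \subseteq T$ and finitely many training points $({\bf x}',c')$. The delicate point is the restricted ${\bf w}$: it is built from a function $o$ with $o({\bf x}) \neq c$, and such functions $\R^d \to C$ a priori form an uncountable family. However, the defining sum in (\ref{eqn:restricted}) evaluates $o$ only at the finitely many points ${\bf x}$ with $({\bf x},c) \in P$, and at each such point $o$ may take one of only $|C|-1$ values. Hence each $P$ admits only finitely many distinct restricted ${\bf w}$, and so the collection of all triples $(P,{\bf w},({\bf x}',c'))$ is finite.

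Next I would apply Lemma~\ref{thm:appdxlemma} to each of these finitely many triples, obtaining a finite set of strictly positive thresholds $\sigma^*$, and set $\sigma_{\min}$ to be their minimum. A minimum of finitely many strictly positive numbers is strictly positive, so $\sigma_{\min}>0$. For any $\sigma$ with $0<\sigma<\sigma_{\min}$ the conclusion of Lemma~\ref{thm:appdxlemma} holds simultaneously across every triple, which is precisely the assertion that $\phi_\sigma$ is neighborly with respect to $T$. Finally, to get uncountably many \emph{distinct} neighborly feature functions, I would note that $\sigma \mapsto \phi_\sigma$ is injective: choosing any two distinct points ${\bf x} \neq {\bf x}'$ in $\R^d$, and using $||\phi_\sigma({\bf x},c)||^2 = 1$, we have $||\phi_\sigma({\bf x},c) - \phi_\sigma({\bf x}',c)||^2 = 2 - 2 e^{\frac{-1}{2\sigma^2} ||{\bf x} - {\bf x}'||^2}$, which is strictly monotonic in $\sigma$. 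Thus the uncountable interval $(0,\sigma_{\min})$ yields uncountably many neighborly $\phi_\sigma$.

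The main obstacle is the uniformity step in the second paragraph: recognizing that the neighborliness condition, although quantified over the a priori unbounded family of restricted ${\bf w}$, really involves only finitely many cases because $o$ is evaluated only on the finite set $P$. Once that finiteness is in hand, passing from the instance-wise thresholds of Lemma~\ref{thm:appdxlemma} to a single positive $\sigma_{\min}$ is routine, and the injectivity argument establishing uncountability is immediate.
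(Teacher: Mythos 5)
Your proposal is correct and follows essentially the same route as the paper's own proof: apply Lemma~\ref{thm:appdxlemma} to the finitely many triples $(P, {\bf w}, ({\bf x}',c'))$, take the minimum of the resulting strictly positive thresholds, and conclude that every $\sigma$ in the uncountable interval below that minimum yields a neighborly $\phi_\sigma$. You additionally spell out two details the paper leaves implicit --- that each $P$ admits only finitely many restricted ${\bf w}$s because $o$ is evaluated only on the finitely many points of $P$, and that $\sigma \mapsto \phi_\sigma$ is injective (via strict monotonicity of $\left\| \phi_\sigma({\bf x},c) - \phi_\sigma({\bf x}',c) \right\|^2 = 2 - 2 e^{\frac{-1}{2\sigma^2}\|{\bf x}-{\bf x}'\|^2}$ in $\sigma$) --- both of which strengthen rather than alter the argument.
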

\begin{proof}
By Lemma \ref{thm:appdxlemma}, for every $P \subseteq T$, restricted ${\bf w}$, and $({\bf x}',c) \in T$, we can find a $\sigma^*>0$ such that $\sigma < \sigma^*$ implies $\argmax_{y \in C} {\bf w} \cdot \phi_\sigma({\bf x}',y) = C_P({\bf x}')$. Let $\Sigma^*$ be the set of all such $\sigma^*$s. Since there are finitely many subsets $P$ of $T$, finitely many restricted ${\bf w}$s corresponding to any given $P$, and finitely many elements in $T$, the set $\Sigma^*$ is finite and thus has a minimum element $\sigma^{\text{min}}$. 

Thus for any of the uncountably many $\sigma$s such that $\sigma < \sigma^{\text{min}}$, there exists a neighborly feature function $\phi_\sigma$. Thus the set of all neighborly feature functions contains a subset of uncountable cardinality, and is thus uncountable.
\end{proof}

\section*{Acknowledgements}
% To be added later.

I would like to thank Mehran Bozorgi, Charles Elkan, and Matus Telgarsky for acting as sounding boards for this idea, and I would like to thank Charles Elkan and Nicolaus Hepler for constructively criticizing the drafts of this paper.

This work was funded by NSF grant \#SBE-0542013 to the Temporal Dynamics of Learning Center, an NSF Science of Learning Center.

\bibliographystyle{plain}
\bibliography{library.bib}

\begin{thebibliography}{1}

\bibitem{Angiulli}
Fabrizio Angiulli.
\newblock Fast condensed nearest neighbor rule.
\newblock In {\em ICML '05: Proceedings of the 22nd international conference on
  Machine learning}, pages 25--32, New York, NY, USA, 2005. ACM.

\bibitem{bishop-2006}
Christopher~M. Bishop.
\newblock {\em Pattern Recognition and Machine Learning (Information Science
  and Statistics)}.
\newblock Springer, August 2006.

\bibitem{CollinsPerceptron}
Michael Collins.
\newblock Discriminative training methods for hidden {Markov} models: theory
  and experiments with perceptron algorithms.
\newblock In {\em EMNLP '02: Proceedings of the ACL-02 conference on empirical
  methods in natural language processing}, pages 1--8, Morristown, NJ, USA,
  2002. Association for Computational Linguistics.

\bibitem{Devroye}
Luc Devroye.
\newblock On the inequality of {Cover} and {Hart} in nearest neighbor
  discrimination.
\newblock {\em {IEEE} Transactions on Pattern Analysis and Machine
  Intelligence}, 3:75--78, 1981.

\bibitem{HartCNN}
P.~Hart.
\newblock The condensed nearest neighbor rule (corresp.).
\newblock {\em Information Theory, IEEE Transactions on}, 14(3):515--516, May
  1968.

\bibitem{Toussaint}
Godfried Toussaint.
\newblock Proximity graphs for nearest neighbor decision rules: recent
  progress.
\newblock In {\em Proceedings of the 34th Symposium on the INTERFACE}, pages
  17--20, 2002.

\bibitem{Wilfong}
Gordon Wilfong.
\newblock Nearest neighbor problems.
\newblock In {\em SCG '91: Proceedings of the seventh annual symposium on
  Computational geometry}, pages 224--233, New York, NY, USA, 1991. ACM.

\bibitem{Wilson}
D.~Randall Wilson and Tony~R. Martinez.
\newblock Reduction techniques for instance-based learning algorithms.
\newblock In {\em Machine Learning}, pages 257--286, 2000.

\end{thebibliography}
\end{document}